\newtheorem{theorem}{Theorem}
\DeclarePairedDelimiterX{\norm}[1]{\lVert}{\rVert}{#1}
\begin{document}

\title{Self-Gradient Networks}

\author{
  \normalfont{Hossein Aboutalebi\textsuperscript{1}, Mohammad Javad Shafiee\textsuperscript{1}}\\
 Alexander Wong\textsuperscript{1}\\
 \textsuperscript{1}Waterloo AI Institute, University of Waterloo, Waterloo, Ontario, Canada\\
  \textsuperscript{1}\{haboutal, mjshafiee, a28wong\}@uwaterloo.ca \\
}

\maketitle

\begin{abstract}
   The incredible effectiveness of adversarial attacks on fooling deep neural networks poses a tremendous hurdle in the widespread adoption of deep learning in safety and security-critical domains.  While adversarial defense mechanisms have been proposed since the discovery of the adversarial vulnerability issue of deep neural networks, there is a long path to fully understand and address this issue.
   In this study, we hypothesize that part of the reason for the incredible effectiveness of adversarial attacks is their ability to implicitly tap into and exploit the gradient flow of a deep neural network.  This innate ability to exploit gradient flow makes defending against such attacks quite challenging.  Motivated by this hypothesis  we argue that if a deep neural network architecture can explicitly tap into its own gradient flow during the training, it can boost its defense capability significantly.   
   Inspired by this fact, we introduce the concept of \textbf{self-gradient networks}, a novel deep neural network architecture designed to be more robust against adversarial perturbations.  Gradient flow information is leveraged within self-gradient networks to achieve greater perturbation stability beyond what can be achieved in the standard training process.  We conduct a theoretical analysis to gain better insights into the behaviour of the proposed self-gradient networks to illustrate the efficacy of leverage this additional gradient flow information.  The proposed self-gradient network architecture enables much more efficient and effective adversarial training, leading to faster convergence towards an adversarially robust solution by at least 10$\times$.  Experimental results demonstrate the effectiveness of self-gradient networks when compared with state-of-the-art adversarial learning strategies, with $10\%$ improvement on the CIFAR10 dataset under PGD and CW adversarial perturbations.

\end{abstract}

\section{Introduction}

Significant advances in deep learning in recent years has led to tremendous interest and impact on industry, with many considering deep learning as a disruptive technology.  More specifically, recent breakthroughs in deep learning-driven computer vision has enabled different industries to leverage deep learning for a wide range of complex tasks such as object detection~\cite{girshick2015fast, bolya2019yolact}, image segmentation~\cite{ronneberger2015u, he2017mask} and image classification~\cite{he2016deep}. Yet, despite these successes, deep learning remains limited in its widespread adoption in safety and security-critical tasks.

One of the major concerns that has acted as a hurdle to widespread adoption of deep learning in safety and security critical domains has been the vulnerability of deep neural networks to adversarial attacks~\cite{apruzzese2019addressing, fawzi2014secure, melis2017deep}. It was first discovered  by~\mbox{Szegedy {\it et al.}} in their seminal paper~\cite{szegedy2013intriguing} that if a perturbation $\epsilon$ is added in a specific direction to the input of a deep neural network, it can cause a network to misclassify an input to an attacker-desired class with a high confidence. Such adversarial attacks are designed such that the perturbation $\epsilon$ when added to the input sample is indistinguishable by the human eye.  In order to achieve the goal of being human indistinguishable, a restriction is imposed on the norm of the perturbation. Such a perturbed input is known as an adversarial example, and poses danger to the use of deep learning in safety and security-critical domains since such attacks cannot be visually identified via human-in-the-loop inspection.

It has been shown that the adversarial vulnerability of deep neural networks is not unique to computer vision tasks and can be extended to other tasks such as speech recognition~\cite{alzantot2018did, qin2019imperceptible}, reinforcement learning~\cite{pattanaik2017robust}, and text classification~\cite{liang2017deep}.

The study started by~\mbox{Szegedy {\it et al.}}~\cite{szegedy2013intriguing} led to significant breakthroughs in understanding and inventing more advanced adversarial attacks to fool deep neural networks~\cite{moosavi2016deepfool, goodfellow2014explaining, madry2017towards,carlini2017towards}. In this regard, fast gradient sign method (FGSM) attack introduced by Goodfellow {\it et al.}~\cite{goodfellow2014explaining} is one of the fastest methods for generating adversarial examples. The sign of the gradient for the loss function is used as a perturbation direction to be added to the input sample.  Moosavi {\it et al.} proposed DeepFool~\cite{moosavi2016deepfool}, one of the first multi-step attacks that can create adversarial examples which are harder to classify.  One of the novelties of this work lies in the fact that this attack is designed such that the magnitude of perturbation in the input is minimized. Madry {\it et al.}~\cite{madry2017towards} proposed a multi-step attack so-called projected gradient descent (PGD) applying FGSM attack iteratively followed by a projection.  They showed that PGD is the hardest first-order adversarial attack in research literature and many of the prior first-order attacks can be seen as a special case of this particular attack.

As the field of adversarial machine learning continues to evolve and advance with more and more advanced attacks, the field of adversarial defense against these attacks has emerged and has gained great attention as well. 
 Among the many defense mechanisms proposed so far, adversarial training~\cite{goodfellow2014explaining} remains one of the core algorithms used to improve adversarial robustness of deep neural networks. Adversarial training was first proposed by  Goodfellow {\it et al.}~\cite{goodfellow2014explaining}. It consists of augmenting the training set with adversarial examples to help the model generalize outside its training domain. This technique is further examined~\cite{kurakin2016adversarial,szegedy2016rethinking} on larger datasets and different neural network architectures. In this regard, different studies~\cite{sinha2017certifying, raghunathan2018certified}   have explored and provided statistical guarantees on the robustness of their proposed adversarial training mechanisms. However, the defensive distillation strategy proposed by Papernot {\it et al.}~\cite{papernot2016distillation} takes a different perspective to adversarial defense by changing the training data labels such that the Softmax outputs of a deep neural network are replaced with a smoothed values. Although this approach was successful in increasing the network robustness against some of the early adversarial attacks, more advanced multi-step attacks like CW~\cite{carlini2017towards} were still able to fool the network. 
 
 Changing the architecture of the network is another solution to mitigate the effect of  adversarial attacks and improve the robustness of the network. Xie {\it et al.}~\cite{xie2019feature} proposed a novel denoising block  which essentially denoises the feature map in the deep neural networks. The proposed block is used as a new operation block after some of the convolutional layers. They take  advantage of multiple denoising operations  inside the denoising block including non-local means~\cite{buades2005non}. Meng \textit{et al.}~\cite{meng2017magnet} proposed a new network architecture called MagNet which consists of modules of detector network for detecting adversarial examples and reformer networks which pushes the perturbed input toward the manifold of normal inputs. Liao \textit{et al.}~\cite{liao2018defense} proposed  a modified version of the denoising autoencoder~\cite{vincent2008extracting} with the U-net architecture~\cite{ronneberger2015u} in the architecture of the deep neural network. This unit is then trained on the error in the top layers of the neural network as loss in order to denoise the image. 

 Since the inception of the field adversarial attack, multiple conjectures have been proposed to justify the mysterious misbehaviour in deep neural network against adversarial attacks. One of the main conjecture is the one proposed by Goodfellow {\it et al.}~\cite{goodfellow2014explaining} which attributes this poor behaviour to the linear behavior of deep neural network in high-dimensional spaces. Here, we further investigate this misbehaviour by considering the possible advantages an adversarial attack can have over the network. We believe that these advantages makes neural networks weak against these kind of attacks.  In this regard, in some adversarial attacks called white-box attacks~\cite{rakin2018parametric}, the attacker has full access to the architecture of the network which provides huge advantage to the attacker by utilizing the gradient of the network. Even in other type of adversarial attacks (i.e., black-box attacks), the attacker has access to a similar architecture which can provide enough information to make the attack successful against the network~\cite{ilyas2018black,papernot2017practical}. As discussed earlier, most of the proposed adversarial attack techniques generate the perturbation by using the gradient of the network which the network does not necessarily learn during the training. This missing information provides a gap in the training process of the network which is directly abused by the adversarial attack. We further explain this phenomenon in the following Sections.
 
 Motivated by this insight, we first provide the theoretical analysis of the problem and explore possible situations that such gradient awareness may exist. We then propose the concept of \textbf{self-gradient networks}, a new deep neural network architecture leverages and learns from the gradient flow to improve its robustness against adversarial attacks. We finally study the effectiveness of the proposed architecture by comparing it with some of the established and state-of-the-art solutions.
 Here, we mainly focus on white-box adversarial attacks which are generally harder than black-box adversarial attacks to solve. As such, the proposed self-gradient networks should be effective against black-box attacks as well.

 \section{Motivation}
 
 One of the main advantage of adversarial attacks over the targeted deep neural network (DNN) is that they have direct access to the architecture of the network. This advantage put adversarial attack one step ahead of the DNN model as they are already aware of the output for a given input. As an extreme example of using this advantage, DeepFool attack~\cite{moosavi2016deepfool} continues to add perturbations in the direction of the gradient  until the DNN starts to misclassify the perturbed input. Although other attacks like PGD or FGSM do not follow a similar iterative scheme and have fixed iteration numbers, this example shows how far the advantage of accessing to the DNN architecture can be abused to the benefit of adversarial attack.

Another key advantage of adversarial attack over the DNN model is the use of gradient of the network to construct the perturbation. During the normal training process, the network is only given the training data as input. In addition, even in the adversarial training process, the perturbed training data without any extra information is fed as input to the DNN model. In both of these cases, it is unlikely for the network to reconstruct the gradient which is heavily used in most adversarial attack to perturb the input. In other words, giving the information regarding the gradient of the DNN model can significantly boost the performance of the model against adversarial attack. 

To validate this hypothesis, we consider a  Wide-ResNet (WRN-28-10, $w=10$)\cite{zagoruyko2016wide} architecture adversarially trained on CIFAR-10~\cite{krizhevsky2009learning} dataset which is trained via Madry PGD-based adversarial training~\cite{madry2017towards} approach. In one case, there is no information regarding the gradient of the network. However, in the other case, the gradient of the loss with respect to the input is provided as an additional input to the network. All other settings in both cases are the similar. Table~\ref{tab:moti} shows the results for PGD attack with 10 steps and clean dataset. As seen, if the model  accesses to the gradient information can outperform the model without accessing to the gradient information significantly by the large margin, close to $40\%$. It is also interesting to see that the gradient also helped the model to have higher accuracy over the clean dataset by the margin of $7\%$. This example clearly illustrates that the benchmark adversarial training is ineffective in reconstructing gradient information during the training time.
While this example reinforces our conjecture regarding the pivotal role of the gradient in improving the results against adversarial attack, it is impractical as we do not have access to the gradient of the loss at test time. However, approximating this information should also improve the performance of the model. 
\begin{table}
\begin{center}
 \begin{tabular}{||c |c |c ||} 
 \hline
\multirow{ 2}{*}{ Model} & \multirow{ 2}{*}{ Clean} & \multirow{ 2}{*}{ PGD10}  \\ [0.5ex]
& & ~Attack \\
 \hline\hline
 With Grad & 92.10 & 82.4  \\ 
 \hline
  Without Grad & 85.7 & 45.1  \\
 \hline
 
\end{tabular}
\caption{Results of the defense models accuracy on CIFAR-10. Both models are trained with Madry PGD-based adversarial training. With Grad refers to the model having gradient data as an extra information. Without Grad is the model only receiving input image. }
	\label{tab:moti}
\end{center}
\end{table}

\subsection{Addressing Challenges}
Although it is impossible to address the first challenge in adversarial attack as it is one of the key assumption of the adversarial attacks (i.e., accessing to the architecture), it is possible to circumvent the second challenge through altering the network architecture. In this work, we propose the concept of self-gradient networks, a deep neural network architecture designed specifically to capture the gradient flow within the network with respect to the input. By explicitly enabling a deep neural network architecture to tap into its own gradient flow, the proposed self-gradient networks can tap into some of the same hidden knowledge exploited by adversarial attack mechanisms to generate adversarial examples. Our experiments show that the self-gradient networks, by harnessing its own gradient flow, can boost robustness significantly in the face of adversarial attacks. 

\section{Self-gradient Networks}
\label{sec:self-gradient}
In this section, the theoretical underpinnings behind the proposed self-gradient networks is described in detail and the insights and intuition behind this concept are provided.  Let us first describe the operation of the fundamental building block of self-gradient networks, which we will refer to as a self-gradient block and is a manifestation of a self-gradient function, and provide a convergence proof for self-gradient functions.

\subsection{Design challenges}
The goal of the proposed self-gradient block is to capture the gradient flow of the network with respect to the input and explicitly leverage this extra information within the network itself to boost defense capabilities. There are two challenges in designing such a self-gradient mechanism. First, providing the information regarding the gradient of the network can basically change the initial network gradient. As a result, the gradient that is used in the adversarial attack can be different from the gradient information which self-gradient block provides back to the network architecture.
Second, the gradient is computed from the scalar value of the loss function which includes the target label in an adversarial attack and during the test time, such information is not available to the network in order to compute the gradient.

Following we first discuss the first challenge and then provide a solution to the second challenge. 

\subsection{Gradient Convergence Theorem}
Providing an additional input to the network can potentially change the existing gradient of the DNN model. However, it is  expected that the gradient to be stationery if the added input is small. The following theorem further analyzes this problem and provides further insight how this problem can be implicitly gets resolved during the training.  

\textbf{Definition}\footnote{The notation $\nabla f(x)$ is the short form of $\nabla_x f(x)$}: The functions $f(\cdot)$ is a self-gradient function if given  the input value $x$, and the assumption that \mbox{$f^0(x)=f(x)$}: 

\begin{align}
f^n(x)=f\Big(x+\epsilon\nabla f^{n-1}(x)\Big)
\end{align}
which means the input to the function at step $n$ is the combination of the input $x$ and the gradient of the function at step~$n-1$.

\textbf{Example:}\\ $f^1(x)=f\Big(x+\epsilon\nabla f(x)\Big)$.\\ $f^2(x)=f\Big(x+\epsilon\nabla f^1(x)\Big)=f\Big(x+\epsilon\nabla f\big(x+\epsilon\nabla f(x)\big)\Big)$. 
\\
\begin{theorem}\label{th:1}  \textbf{Convergence of Self-Gradient Functions (CSGF):}
Given  the self-gradient functions $f(\cdot)$, the input value $x$ and a small noise scalar $\epsilon $ with the condition $0\leq\epsilon< 1$, then we have: \begin{align}
\lim_{n\to\infty}f^n(x)-f^{n-1}(x) = 0
\end{align}

\label{th:one}
\end{theorem}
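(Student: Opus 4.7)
My plan is to recognize the self-gradient recursion as a fixed-point iteration, establish it as a contraction under natural smoothness hypotheses, and then conclude by continuity of $f$.

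First I would unpack the notation via the paper's worked example. The expansion $f^2(x) = f(x+\epsilon \nabla f(x+\epsilon \nabla f(x)))$ makes clear that $\nabla f^{n-1}(x)$ is meant to denote $\nabla f$ evaluated at the argument fed into $f$ at step $n-1$, rather than the full total derivative of the composition $f^{n-1}$ with respect to $x$ (which would drag in Hessian-vector products through the chain rule). Accordingly, setting $y_0 = x$ and $y_n = x + \epsilon \nabla f(y_{n-1})$ for $n \ge 1$, one has $f^n(x) = f(y_n)$, so the target reduces to showing $f(y_n) - f(y_{n-1}) \to 0$.

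Second, with $x$ treated as a fixed parameter, I would view $T(y) := x + \epsilon \nabla f(y)$ as a self-map whose iterates are exactly $y_1, y_2, \ldots$. Under the smoothness hypothesis that $\nabla f$ is $L$-Lipschitz with $L\le 1$ (natural in the self-gradient setting, where gradient magnitudes are implicitly normalized), one obtains $\|T(y)-T(y')\| = \epsilon\,\|\nabla f(y) - \nabla f(y')\| \le \epsilon L\,\|y-y'\|$, so $T$ is a contraction with factor $\epsilon L < 1$. A short induction then yields $\|y_n - y_{n-1}\| \le (\epsilon L)^{n-1}\|y_1 - y_0\|$, which decays geometrically to zero. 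Since $f$ is itself Lipschitz on the bounded set containing the iterates, $|f(y_n) - f(y_{n-1})| \le L_f\,\|y_n - y_{n-1}\| \to 0$, giving the claim. Equivalently, Banach's fixed-point theorem gives $y_n \to y^\ast$ with $y^\ast = x + \epsilon \nabla f(y^\ast)$, and continuity of $f$ then yields the successive-differences conclusion.

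The main obstacle is that the stated hypothesis $0\le\epsilon<1$ alone is not strong enough: what the contraction argument genuinely needs is $\epsilon \cdot \mathrm{Lip}(\nabla f) < 1$. Any rigorous writeup must therefore either invoke an implicit bound on $\mathrm{Lip}(\nabla f)$ (e.g.\ the $C^2$-with-bounded-Hessian assumption standard in smooth optimization) or restrict attention to the normalized-gradient regime used in practice by self-gradient networks, where a $\sign$ or unit-norm projection makes the effective Lipschitz constant of the gradient map at most one. Making this auxiliary assumption explicit, and justifying it architecturally, is the delicate step the proof has to confront; once it is in place the rest is a standard contraction-mapping calculation.
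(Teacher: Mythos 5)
Your proposal is correct in substance but follows a genuinely different route from the paper. The paper's proof Taylor-expands $f\bigl(x+\epsilon\nabla f^{n-1}(x)\bigr)$ and $f\bigl(x+\epsilon\nabla f^{n-2}(x)\bigr)$ around the common base point $x$, subtracts the two series, and recursively unrolls the resulting identity so that each unrolling contributes another factor of $\epsilon$; the difference $f^n(x)-f^{n-1}(x)$ is thereby written as $\epsilon^{n-1}$ times an $n$-th order derivative expression, which is declared to vanish because $\epsilon<1$. Note that in the paper's argument $\nabla f^{n-1}$ is treated as the total derivative of the composite $f^{n-1}$ (so Hessian terms enter through the chain rule), whereas you adopt the reading suggested by the paper's worked example, namely $\nabla f$ evaluated at the previous iterate; your reformulation $y_n = x+\epsilon\nabla f(y_{n-1})$ then turns the recursion into a fixed-point iteration and Banach's theorem does the rest. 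Your approach buys several things the paper's does not: an explicit geometric rate $(\epsilon L)^{n-1}$, convergence of the iterates themselves (not merely of successive differences) to a genuine fixed point $y^\ast = x+\epsilon\nabla f(y^\ast)$, and a clean isolation of the one assumption actually needed, $\epsilon\cdot\mathrm{Lip}(\nabla f)<1$. The paper's argument needs an analogous hidden hypothesis — the final expression $\epsilon^{n-1}\,\nabla f(x)\cdot\nabla^n(\cdots)$ only tends to zero if the $n$-th derivatives of $f$ do not outgrow $\epsilon^{-(n-1)}$, which fails already for innocuous functions with unbounded higher derivatives — but never states it; it also contains an algebraic slip in the second-order term (the difference of two quadratic forms $a^{T}Ha-b^{T}Hb$ is $(a-b)^{T}H(a+b)$, not $(a-b)^{T}H(a-b)$), which your contraction argument sidesteps entirely. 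So your explicitly flagged auxiliary assumption is not a weakness relative to the paper: some such smoothness bound is unavoidable, and you are the one who names it.
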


\begin{proof}
\begin{align}
    &\lim_{n\to\infty}f^n(x)-f^{n-1}(x) \nonumber\\
    & =\lim_{n\to\infty}f(x+\epsilon\nabla f^{n-1}(x))-f(x+\epsilon\nabla f^{n-2}(x))
    \nonumber\\
    &=\lim_{n\to\infty} f(x) + \nabla f(x). \epsilon\nabla f^{n-1}(x) \nonumber \\
    &+ \epsilon\nabla f^{n-1}(x)^T H(x) \epsilon\nabla f^{n-1}(x) + ...
    \nonumber\\
    &- \lim_{n\to\infty} f(x) + \nabla f(x). \epsilon\nabla f^{n-2}(x) \nonumber \\
    &+ \epsilon\nabla f^{n-2}(x)^T H(x) \epsilon\nabla f^{n-2}(x) + ...  \nonumber\\
    &= \lim_{n\to\infty}  \epsilon
    \nabla f(x). \Big(\nabla f^{n-1}(x)-\nabla f^{n-2}(x)\Big)\nonumber\\
    &+ \lim_{n\to\infty} \epsilon^2  \Big(\nabla f^{n-1}(x)-\nabla f^{n-2}(x)\Big)^T  \cdot \nonumber\\
    & H(x) \Big(\nabla f^{n-1}(x)-\nabla f^{n-2}(x)\Big) + ...
    \nonumber\\
    &=  \lim_{n\to\infty}  \epsilon^2
    \nabla f(x). \nabla \Big(\nabla f(x).\big(\nabla f^{n-2}(x)-\nabla f^{n-3}(x)\big)\Big)\nonumber\\
    &\hspace{5cm}\vdots
    \nonumber\\
    &= \lim_{n\to\infty}  \epsilon^{n-1}
    \nabla f(x). \nabla^n \Big(\nabla f(x).\big(\nabla f^{1}(x)-\nabla f(x)\big)\Big)=0
\end{align}
\end{proof}

\begin{figure*}[!]
\setlength{\tabcolsep}{0.01cm} 
\centering
        \includegraphics[width=0.6\textwidth]{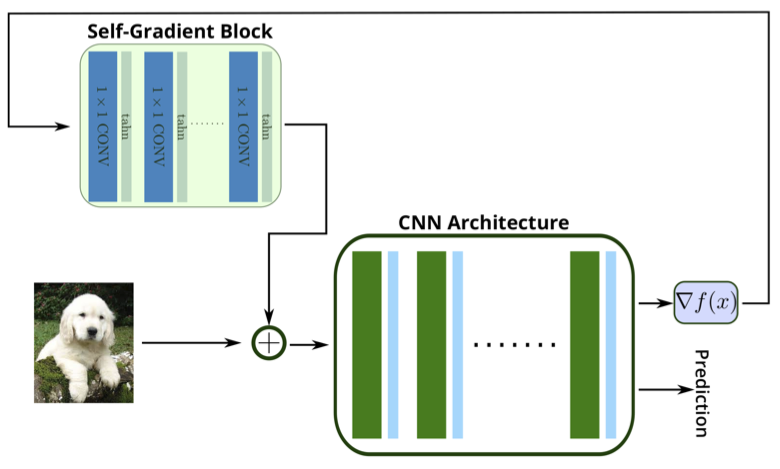}
\caption{The architecture of self-gradient network. The input data is passed through the network once and the soft-loss is calculated  based on~\eqref{eq:soft-loss}. The computed soft loss value is fed into the self-gradient block which is added to the input data when it is processed and the main network predicts the output in the second forward pass. It is worth noting that this workflow is completely different than recurrent neural networks.  }
\label{fig:arc}
\end{figure*}

Theorem~\ref{th:one} provides the first building block in constructing self-gradient networks. From this theorem, we can deduce that if the gradient information with small value of $\epsilon$ is added to the network, it is expected that the gradient of the network remains stationary in the limit. This is similar to the paradigm in designing the perturbation for adversarial example in which the perturbation has a limited norm in order to prevent the image to be unrecognizable. 
 
 Theorem~\ref{th:one} illustrates that given a self-gradient function, it is possible to provide perfect estimation with zero loss in $n$ steps of the computation. However, accessing to the ground truth information and calculating the loss and gradient in the inference time is practically impossible. As such, here we proposed the self-gradient block which can approximate the self-gradient function for a given deep neural network.

\subsection{Self-gradient Block}

Putting these pieces together, we were able to forge the self-gradient block in the architecture of the network. A self-gradient block is a processing block which aims to estimate the function gradient based on some approximations.
The scheme of the self gradient architecture is demonstrated in Figure~\ref{fig:arc}. As seen, the deep neural network requires two forward passes to calculate the output  given the input sample. On the first pass, the network computes the gradient of the input and on the second pass, it takes advantage of the gradient of the input via a $1 \times 1$ convolutional layer and $tanh$ activation and then adds it to the input. The final output of the network is determined based on this second passes. 

The main reason we used the $1 \times 1$ convolutional layer is to mimic the behaviour of the $\epsilon$ multiplied by the sign of the gradient in the $tanh$ activation function as the following layer to restrict the outputs within the range of $[-1,1]$ which is the range of the sign of the gradient. Also, the $tanh$ provides the non-linearity needed to better approximate the missing of actual loss computation in the self-gradient block. In this regard, in order to improve the results in our experiments, we stacked together $5$ layers of $1 \times 1$ convolutional layer and $tanh$ activation which considerably improves the approximation and the final results.

While the proposed self-gradient block has a simple architecture, the approximation of the loss function to be used as the input to this block is important to be effective. Following, the loss function approximation is introduced and explained in more detail.

\subsection{Soft Loss Function}

The gradient used in adversarial attack is usually taken from loss function with respect to the input. On the other hand, as this information is not available during the test time, we had to improvise an alternative scalar quantity. As such, the loss function plays an important role in the proposed self-gradient block.

Comprehensive experimental results showed that the sum of the logit of the final layer of the  deep neural network model in the image classification task can help the proposed block to learn effectively. To this end the loss function used to train the new deep neural network design is formulated as follows:
\begin{align}
    L(\theta,x) = \sum_{i=1}^c l_i (x,\theta)
    \label{eq:soft-loss}
\end{align}
where $c$ is the number of classes for the classification task and $\theta$ is the parameters of the network. $l_i$ is the score corresponding to the class $i$ computed by the layer prior to the softmax operation in the network, we name the used loss function as soft loss.

Even though this loss function is different from the loss function used in the adversarial attack, it captures most of the gradient flow of the deep neural network. Moreover, by using special architecture design, we expect self-gradient block was able to successfully plays the role of a denoising block during the adversarial attack. The key element in this design was to provide an extra information regarding the gradient flow of the deep neural network for a given input which can not be learnt in the normal training process.

\subsection{Training advantage}
Experimental results show that the self-gradient block makes the training process more affordable against some of the harshest adversarial attacks. In this regard, as it is an architectural change to the model, unlike other defense mechanism like TRADES proposed by Zhang \textit{et al.}~\cite{zhang2019theoretically} which may require exhaustive hyper parameter search, no hyper-parameter tuning for the proposed self-gradient network is necessarily needed to find the best setting against adversarial attack. 
In addition, in our extensive experiments, we found out that self gradient provides an additional significant benefit in adversarial training against multi-step attacks like PGD. In a generic adversarial training,  the model is usually trained against 10-step or higher steps against multi-step attacks like PGD in order to gain robustness against these iterative attacks. On the other hand, increasing the step number in attack approximately increases the running time by the factor of step number (i.e., usually $>10 \times$). This makes adversarial training a costly process and inefficient.
However, results show that the self-gradient block is capable of lifting this barrier and still achieves impressive results against multi-step attacks by only requiring one step adversarial attack training. More  details regarding the training process is provided in the next sections and the  supplementary materials. 

\section{Gradient Flow}
One important question may arise is that if the proposed self-gradient network architecture is differential. The gradient flow in self-gradient architecture is  similar to recurrent neural networks (RNNs) to some extents. When the  operations inside the RNN model is unfolded, the next output is directly dependent on the previous output of the model. In a similar way, in the self-gradient network architecture, the output of the network is dependent on its gradient calculated from the previous pass. 

As such, to break this dependency loop in our design, we break the forward pass in self-gradient network into two passes:
\begin{itemize}
    \item In the first pass, the input is given with gradient vector set to zero. This essentially would not add any information to the input sample in the first pass.
    \item In the second pass, the gradient of the first pass is added to the input and the the prediction output of the network is calculated in the second pass.
\end{itemize}  
As there exist no loop in this  design process, the network remains differential similar to RNN architectures.

\section{Experimental Results \& Discussion}
The proposed method is evaluated via different public datasets and compared with the state-of-the-art algorithms proposed for adversarial robustness. The competing methods are evaluated via three main adversarial attacks including FGSM~\cite{goodfellow2014explaining}, PGD~\cite{madry2017towards} and CW~\cite{carlini2017towards} attacks. 

\subsection{Experimental Setup}
As our baselines for performance comparison, we chose the Madry PGD-based adversarial training~\cite{madry2017towards} approach as well as a recently proposed adversarial training called Bilateral adversarial training~\cite{wang2019bilateral}. This method take advantage of perturbing on both  image input  and the label during training to make the model more robust against adversarial attacks. We also include the results of standard training which refers to the case `without adversarial training'. The models are tested against FGSM, PGD, and CW attacks.
\subsubsection{Datasets}
Three main datasets are used to evaluate the competing algorithms include:
\begin{itemize}
    
\item \textbf{CIFAR-10:}~\cite{krizhevsky2009learning}: is consist of $32 \times 32$ natural images categorized into 10 different class labels. The dataset has  50000 training samples and 10000 test images. 

\item \textbf{CIFAR-100}~\cite{krizhevsky2009learning}: is comprised of 50000 training   \mbox{$32 \times 32$} natural images and 10000 test images like CIFAR-10. However, the images are categorized into 100 class labels. As such, the number of training examples per class label is very limited.

\item \textbf{SVHN}~\cite{netzer2011reading}: is comprised of the \mbox{$32 \times 32$} street view house numbers images with 73257  training images and 26032 test samples.
\end{itemize}

\subsubsection{Hyperparameters}

The Wide ResNet (WRN-28-10)~\cite{zagoruyko2016wide} as the main network architecture used in conducted experiments as it is the commonly used architecture to evaluate the adversarial training algorithm~\cite{zhang2019theoretically, madry2017towards}. The factor 10 in the network name encodes how wide the network architecture is chosen. 

To evaluate the proposed self-gradient architecture, the WRN-28-10 is changed accordingly as  outlined in Section~\ref{sec:self-gradient}. The source code of our implementation will be available for interested readers\footnote{link to the code is provided in supplementary}. The model is  trained  with 200 epochs and learning rates of $\{0.1, 0.1, \text{and}~ 0.01\}$ for CIFAR-10, CIFAR-100 and SVHN respectively. These learning rate were multiplied by $0.1$ at epochs $\{50,100,150\}$. The $\epsilon$ is set to 8 for adversarial perturbation budget  similar to other works in the literature. Further details of training with extra experiments is included the supplementary.

\begin{table*}[ht] 
    	\caption{Results of the defense models accuracy on CIFAR-10. Self gradient refers to the proposed method in this paper.
    	Significant improvement is achieved with self gradient model compared with previous models with the margin of over 5 $\%$. For the multi-step attack of PGD and CW, the number besides the attack refers to the number of steps used in that attack.}
	\begin{tabular}{|l| c | c | c c c c |c c c c|} 
		\hline
		 \multirow{ 2}{*}{ Model} & Clean &FGSM &PGD10 &  PGD20  & PGD40  & PGD100 &  CW10 & CW20 & CW40 & CW100\\& &Attack & Attack & Attack & Attack & Attack & Attack & Attack & Attack & Attack \\ [0.5ex] 
		\hline\hline
		\multirow{ 2}{*}{\bf Standard}  &\multirow{ 2}{*} {95.6} &\multirow{2}{*} {36.9}&\multirow{ 2}{*} {0.0}  &\multirow{ 2}{*} {0.0} &\multirow{ 2}{*} {0.0} &\multirow{ 2}{*} {0.0} &\multirow{ 2}{*} {0.0} &\multirow{ 2}{*} {0.0} &\multirow{ 2}{*} {0.0} &\multirow{ 2}{*} {0.0} \\& &  &  &  &  &  &  &  &  &\\ 
		\hline
		\multirow{ 2}{*}{\bf Madry PGD }  &\multirow{ 2}{*} {85.7} &\multirow{2}{*} {54.9}&\multirow{ 2}{*} {45.1}  &\multirow{ 2}{*} {44.9} &\multirow{ 2}{*} {44.8} &\multirow{ 2}{*} {44.8} &\multirow{ 2}{*} {45.9} &\multirow{ 2}{*} {45.7} &\multirow{ 2}{*} {45.6} &\multirow{ 2}{*} {45.4} \\& &  &  &  &  &  &  &  &  &\\ 
		\hline
		\multirow{ 2}{*}{\bf Bilateral }  &\multirow{ 2}{*} {91.2} &\multirow{2}{*} {70.7}&\multirow{ 2}{*} {--}  &\multirow{ 2}{*} {57.5} &\multirow{ 2}{*} {--} &\multirow{ 2}{*} {55.2} &\multirow{ 2}{*} {--} &\multirow{ 2}{*} {56.2} &\multirow{ 2}{*} {--} &\multirow{ 2}{*} {53.8} \\& &  &  &  &  &  &  &  &  &\\ 
		\hline
		\multirow{ 2}{*}{\bf Self gradient }  &\multirow{ 2}{*} {90.9} &\multirow{2}{*} {81.4}&\multirow{ 2}{*} {64.37}  &\multirow{ 2}{*} {63.98} &\multirow{ 2}{*} {63.90} &\multirow{ 2}{*} {63.88} &\multirow{ 2}{*} {64.41} &\multirow{ 2}{*} {64.14} &\multirow{ 2}{*} {64.1} &\multirow{ 2}{*} {64.09 } \\& &  &  &  &  &  &  &  &  &\\ 
		\hline
	\end{tabular}

	\label{tab:cifar10}
\end{table*}

\begin{table*}
\caption{Results of the defense models accuracy on CIFAR100. Self gradient refers to the proposed method in this paper. Significant improvement is achieved with self gradient model compared with previous models with the margin of over 10 $\%$. For the multi-step attack of PGD and CW, the number besides the attack refers to the number of steps used in that attack.}
	\begin{tabular}{|l| c | c | c c c c |c c c c|} 
		\hline
		\multirow{ 2}{*}{ Model} & Clean &FGSM &PGD10 &  PGD20  & PGD40  & PGD100 &  CW10 & CW20 & CW40 & CW100\\& &Attack & Attack & Attack & Attack & Attack & Attack & Attack & Attack & Attack \\ [0.5ex] 
		\hline\hline
		\multirow{ 2}{*}{\bf Standard}  &\multirow{ 2}{*} {79.0} &\multirow{2}{*} {10.0}&\multirow{ 2}{*} {0.0}  &\multirow{ 2}{*} {0.0} &\multirow{ 2}{*} {0.0} &\multirow{ 2}{*} {0.0} &\multirow{ 2}{*} {0.0} &\multirow{ 2}{*} {0.0} &\multirow{ 2}{*} {0.0} &\multirow{ 2}{*} {0.0} \\& &  &  &  &  &  &  &  &  &\\ 
		\hline
		\multirow{ 2}{*}{\bf Madry PGD }  &\multirow{ 2}{*} {59.9} &\multirow{2}{*} {28.5}&\multirow{ 2}{*} {23.1}  &\multirow{ 2}{*} {22.6} &\multirow{ 2}{*} {22.4} &\multirow{ 2}{*} {22.3} &\multirow{ 2}{*} {24.0} &\multirow{ 2}{*} {23.2} &\multirow{ 2}{*} {23.1} &\multirow{ 2}{*} {23.0} \\& &  &  &  &  &  &  &  &  &\\ 
		\hline
		\multirow{ 2}{*}{\bf Bilateral }  &\multirow{ 2}{*} {68.2} &\multirow{2}{*} {60.8}&\multirow{ 2}{*} {--}  &\multirow{ 2}{*} {26.7} &\multirow{ 2}{*} {--} &\multirow{ 2}{*} {25.3} &\multirow{ 2}{*} {--} &\multirow{ 2}{*} {23.0} &\multirow{ 2}{*} {--} &\multirow{ 2}{*} {22.1} \\& &  &  &  &  &  &  &  &  &\\ 
		\hline
		\multirow{ 2}{*}{\bf Self gradient }  &\multirow{ 2}{*} {72.91} &\multirow{2}{*} {58.12}&\multirow{ 2}{*} {43.0}  &\multirow{ 2}{*} {42.72} &\multirow{ 2}{*} {42.67} &\multirow{ 2}{*} {42.64} &\multirow{ 2}{*} {42.74 } &\multirow{ 2}{*} {42.59} &\multirow{ 2}{*} {42.57} &\multirow{ 2}{*} {42.57 } \\& &  &  &  &  &  &  &  &  &\\ 
		\hline
	\end{tabular}
	
	\label{tab:cifar100}
\end{table*}

\begin{table*}
\caption{Results of the defense models accuracy on SVHN. Self gradient refers to the proposed method in this paper. Significant improvement is achieved with self gradient model compared with previous models with the margin of over 10 $\%$. For the multi-step attack of PGD and CW, the number besides the attack refers to the number of steps used in that attack.}
	\begin{tabular}{|l| c | c | c c c c |c c c c|} 
		\hline
		\multirow{ 2}{*}{ Model} & Clean &FGSM &PGD10 &  PGD20  & PGD40  & PGD100 &  CW10 & CW20 & CW40 & CW100\\& &Attack & Attack & Attack & Attack & Attack & Attack & Attack & Attack & Attack \\ [0.5ex] 
		\hline\hline
		\multirow{ 2}{*}{\bf Standard}  &\multirow{ 2}{*} {97.2} &\multirow{2}{*} {53.0}&\multirow{ 2}{*} {0.0}  &\multirow{ 2}{*} {0.0} &\multirow{ 2}{*} {0.0} &\multirow{ 2}{*} {0.0} &\multirow{ 2}{*} {0.0} &\multirow{ 2}{*} {0.0} &\multirow{ 2}{*} {0.0} &\multirow{ 2}{*} {0.0} \\& &  &  &  &  &  &  &  &  &\\ 
		\hline
		\multirow{ 2}{*}{\bf Madry PGD}  &\multirow{ 2}{*} {93.9} &\multirow{2}{*} {68.4}&\multirow{ 2}{*} {49.5}  &\multirow{ 2}{*} {47.9} &\multirow{ 2}{*} {46.5} &\multirow{ 2}{*} {46.0} &\multirow{ 2}{*} {49.5} &\multirow{ 2}{*} {48.7} &\multirow{ 2}{*} {48.1} &\multirow{ 2}{*} {47.3} \\& &  &  &  &  &  &  &  &  &\\ 
		\hline
		\multirow{ 2}{*}{\bf Bilateral }  &\multirow{ 2}{*} {94.1} &\multirow{2}{*} {69.8}&\multirow{ 2}{*} {--}  &\multirow{ 2}{*} {53.9} &\multirow{ 2}{*} {--} &\multirow{ 2}{*} {50.3} &\multirow{ 2}{*} {--} &\multirow{ 2}{*} {50.0} &\multirow{ 2}{*} {--} &\multirow{ 2}{*} {48.9} \\& &  &  &  &  &  &  &  &  &\\ 
		\hline
		\multirow{ 2}{*}{\bf Self gradient }  &\multirow{ 2}{*} {95.87} &\multirow{2}{*} {79.03}&\multirow{ 2}{*} {67.52}  &\multirow{ 2}{*} {66.91} &\multirow{ 2}{*} {66.65} &\multirow{ 2}{*} {66.55} &\multirow{ 2}{*} {67.16 } &\multirow{ 2}{*} {66.77} &\multirow{ 2}{*} {66.67} &\multirow{ 2}{*} {66.61 } \\& &  &  &  &  &  &  &  &  &\\ 
		\hline
	\end{tabular}
	
	\label{tab:SVHN}
\end{table*}

\subsection{Results}
Table~\ref{tab:cifar10} summarizes the results of competing  defense mechanism on the CIFAR-10 dataset. As observed, the standard model while having a high accuracy for clean dataset, it can be completely fooled via PGD and CW attacks and results zero accuracy. Madry PGD based adversarial training significantly improves the standard model performance by the margin of  $40\%$. As discussed in~\cite{zhang2019theoretically}, there is  a price that comes with robustness, and we can see a drop in the clean dataset accuracy of the Madry PGD based adversarial training. While the proposed self-gradient network and the Bilateral training approach suffer from this issue, results show that they can offer higher generalization compared to Mardy PG adversarial training. 

Experimental results demonstrates the improvement of Bilateral adversarial training approach over Madry PD adversarial training; However, the self-gradient network is able to  consistently outperforms both algorithms with over $5\%$ against  more sophisticated attacks like PGD and CW. It is also interesting to see that the improvement remains consistent even when the number of steps in attacks like PGD and CW increases. 

For the second experiment, the proposed method and competing algorithms are evaluated based on SVHN dataset. Table~\ref{tab:SVHN} further confirms the previous results reported in Table~\ref{tab:cifar10} As seen. Madry PGD based adversarial training and Bilateral training are significantly less effective than the proposed self gradient network. This table again shows that self-gradient system provides a more consistent robustness against both CW and PGD attack regardless of the step numbers in these attacks.

Table~\ref{tab:cifar100} shows  the results of different defense models on  CIFAR-100 dataset. CIFAR-100 contains 100  classes but with the same training size as CIFAR-10, dataset and this causes the models to not to be trained as effective as when trained on CIFAR-10 dataset. This is evident in the reported results in Table~\ref{tab:cifar10}.  However, a similar trend can be  observed in Table~\ref{tab:cifar100} for the competing methods. Madry PGD based adversarial training provides a significant robustness improvement over standard training. This improvement is marginally enhanced by the Bilateral training. However, Bilateral training has a slightly lower performance than the Madry PGD based adversarial training  against CW attack. The Table again shows that the proposed self-gradient network  can achieve a more consistent robustness against both CW and PGD attack regardless of the step numbers in these attacks. It also improves the results by the margin of over $10\%$.

\subsection{Impact of Self-gradient Block}
The proposed self-gradient block has shown to be effective in helping the network and improving the robustness of the model significantly. The proposed self-gradient block is being used during the training and the inference. However one important question to ask is how the network behaves if this block is disabled during the inference time. This analysis can help us to measure how effective is the proposed block even in facilitating a better training experience for the model. As such here, we study the effect of self gradient block in more detail. To measure the impact of the self gradient-block in helping the model to better learn the adversarial space, the block is disabled in the inference time to see how much it is going to impact the results.

Table~\ref{tab:ab} shows the results; to  evaluate the models  PGD10 attack  (PGD with 10 steps) is used to measure the robustness of the model and the reported result on clean type is based on the natural samples from datasets. The examined  model is trained with self-gradient block during the training phase and for different scenarios the self-gradient block is enabled and disabled. As  seen, the robustness of the models drops against adversarial attack across all datasets when self-gradient block is disabled which is expected while these drop is not significant and the model still performing well against adversarial attacks compared to state-of-the-art accuracy. On the other hand, there is no significant changes in the clean dataset accuracy in this scenario. However, in case of CIFAR-10 dataset, the accuracy over clean dataset has rose after disabling self gradient block during the test time.

These results show that after training the model with self-gradient block in the training phase, the self-gradient block can even  be disabled in the inference time to improve the efficiency with a small sacrifice in  accuracy against adversarial attacks. The results also show that the self-gradient block is particularly affecting the model accuracy against adversarial attack and does not have much impact on its accuracy for clean data sample.

\begin{table*}

\centering
\caption{Comparison of Self gradient block enabled versus the time it is disabled. Adversarial refers to PGD10 attack and clean refers to normal dataset. While the accuracy against adversarial attack is decreased for self grad disabled model, the accuracy on clean dataset is almost intact.}
	\begin{tabular}{|c| c | c  c c |} 
		\hline
		\multirow{ 2}{*}{ Model} & Dataset &CIFAR10 &CIFAR100 &  SVHN\\& Type & Dataset & Dataset & Dataset \\ [0.5ex] 
		\hline\hline
		\multirow{ 2}{*}{\bf Self gradient block disabled}  & Clean &91.15& 72.81 & 95.84 \\&Adversarial  & 60.56  & 36.65  & 66.11  \\\hline 
		\multirow{ 2}{*}{\bf Self gradient block enabled}  & Clean &90.9& 72.91 & 95.87 \\&Adversarial  & 64.37  & 43.0  & 67.52  \\ 
		\hline
	\end{tabular}
	
	\label{tab:ab}
\end{table*}

\subsection{Convergence of Self-gradient Block}

In Theorem~\ref{th:1}, we showed that self-gradient functions should converge to a stationary gradient. To evaluate this characteristic for self gradient block, we examine if its gradient  converges to a stationary gradient in the limit as described in Theorem~\ref{th:1}. 

To examine this, we took a trained network with self gradient block on CIFAR10 dataset and computed its gradient iteratively. In this regard, for the first iteration we passed the gradient vector of zero to the network and computed the self gradient block value. Then we passed this gradient value as input to the self gradient block on the next iteration and repeated this process recursively.  As  seen in Figure~\ref{fig:steps}, the difference of the norm between gradient rapidly reduces after the first step and  the gradient  ultimately converges to a value close to zero. The Figure~\ref{fig:steps} shows that the significant amount of  the gradient is absorbed in the first iteration and the consequent iterations  impose minimal impact on the gradient value, as such   in the design of self-gradient block  one loop over the network instead of multiple loops is performed. 
\begin{figure}
    \centering
    \includegraphics[width=0.37\textwidth]{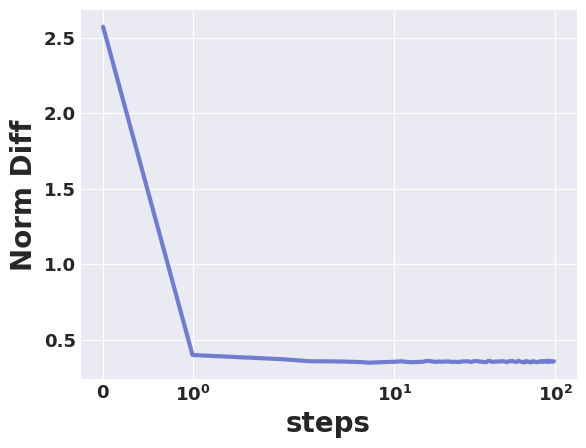}
    \caption{Convergence of gradient block to stationary gradient. Norm Diff refers to the l2 norm difference between gradient of new step from previous step.}
    \label{fig:steps}
\end{figure}

\section{Conclusion}

In this paper, we propose a novel perspective on the adversarial attack and discuss the shortcoming of the deep neural networks from a new angle. We further analyze this shortcoming from the theoretical standpoint and show  the evidence of its existence as a motivation for our novel approach to overcome this problem.  As a solution, we propose a novel architectural change to the convolutional neural network called self gradient block. 

For the first time, the self gradient block enables the neural network to learn the gradient of the output with  respect to the input. This new information enables network to learn the core component used in the adversarial attack against the network which ultimately can enhance network robustness. We also find that this block enables networks to be trained faster against multi-step adversarial attacks by requiring only one step attack during the training phase. Finally, we examine our solution and compare it with some of the established adversarial training across different benchmark datasets. Our results illustrate the effectiveness of our proposal against adversarial attacks.
{\small
\bibliographystyle{ieee_fullname}
 \bibliography{egbib}
}
\end{document}